\theoremstyle{plain}
\newtheorem*{thm*}{Theorem}
\newtheorem{thm}{Theorem}[section]
\newtheorem{lemma}[thm]{Lemma}
\newtheorem*{lemma*}{Lemma}
\newtheorem{corollary}[thm]{Corollary}
\newtheorem*{corollary*}{Corollary}
\newtheorem*{prop*}{Proposition}
\newtheorem{defn}{Definition}
\newtheorem*{conjecture*}{Conjecture}
\newcommand{\R}{\mathbb{R}}
\newcommand{\N}{\mathbb{N}}
\newcommand{\1}{\mathbbm{1}}
\newcommand{\dist}{\mathrm{dist}}
\author{
Henry-Louis de Kergorlay%
    \thanks{%
           School of Mathematics,
           University of Edinburgh,
           Edinburgh, EH9 3FD, UK
           (\texttt{hdekerg@ed.ac.uk})
           }
\and
        Desmond John Higham%
            \thanks{%
           School of Mathematics,
           University of Edinburgh,
           Edinburgh, EH9 3FD, UK
           (\texttt{d.j.higham@ed.ac.uk}).
         Both authors were supported by Engineering and Physical
     Sciences Research Council grant EP/P020720/1.
        }
        }
\date{}
\title{Consistency of Anchor-based Spectral Clustering}
\begin{document}
\maketitle

\begin{abstract}

    Anchor-based techniques reduce the computational
    complexity of spectral clustering algorithms.
    Although empirical tests have shown promising results, there is currently a lack of theoretical support for the anchoring approach.
    We define a specific anchor-based algorithm and show that it
  is amenable to rigorous analysis, as well as being effective in practice.
      We establish the theoretical consistency of the method
       in an asymptotic setting where data is sampled
        from an underlying continuous probability distribution.
         In particular, we provide sharp asymptotic conditions for the algorithm parameters
         which ensure that
          the anchor-based method can recover with high probability
           disjoint clusters that are mutually separated by a positive distance.
            We illustrate the
            performance of the algorithm on synthetic
             data and explain how the
              theoretical convergence analysis can be used to inform the
             practical choice of parameter scalings.
             We also test the accuracy and efficiency of the algorithm on two large scale real data sets.
             We find that the algorithm offers clear advantages over standard spectral clustering.
              We also find that it
             is competitive with the state-of-the-art
             LSC method of Chen and Cai (Twenty-Fifth AAAI Conference on Artificial Intelligence, 2011), while having the added
             benefit of a consistency guarantee.

\end{abstract}

\section{Introduction and Motivation}
\label{sec:int}

Spectral methods are an effective choice for clustering in
unsupervised learning,
\cite{BePa07,spectralClusteringTutorial,ZhRo18}.
Although there are many versions, the overall approach
consists of two distinct steps.
First
the spectrum of a graph Laplacian is used to embed
data points in a low dimensional Euclidean space.
Then a standard clustering method,
typically
$k$-means, is used to
partition the embedded points.

These algorithms may be analyzed
in the context where
there is an underlying continuous domain from which
$n$
data points are generated by sampling identically and independently at random.
In this way, we may quantify the performance of an
algorithm in the limit $n\to \infty$.
In particular,
a learning algorithm is said to be consistent if the quantity, labeling, or other property obtained from the output of the algorithm is shown to converge, in some sense, with high probability (w.h.p.).
For clustering algorithms, it is natural to ask whether
the computed clustering converges to some canonical partition of the underlying sampling set.  In the special case of spectral clustering, consistency may be shown by proving spectral convergence of the graph Laplacian matrices (suitably normalized) to an underlying continuous Laplacian. Such convergence is well-understood in the case where the domain is bounded (or more generally where the domain is a compact Riemannian manifold). In \cite{eigenmaps,diffusionMaps,fromGraphToManifold} Belkin and Nigoyi, Coifman and Lafon, and Singer derived conditions for pointwise convergence. In \cite{consistencySpectralClustering} Luxburg et al.\ first formulated general conditions for spectral convergence to hold. In \cite{variationalApproach} Trillos and Slepcev provided
threshold values for the decrease in the bandwidth parameter of the graph
as a function of $n$, beyond which spectral convergence holds.

Spectral clustering algorithms involve several computationally expensive steps, limiting their use for large scale data sets. First, in order to generate a graph Laplacian matrix, we must compute an affinity matrix, which has time complexity $O(dn^2)$, where $d$ is the dimension of the points, i.e., the number of features. Second, we require the spectrum of a graph Laplacian, which has time complexity $O(n^3)$; see, for instance, \cite{LSC}. Various authors proposed
anchor-based approaches as a means to decrease the overall time complexity without significantly degrading accuracy. These methods start by selecting a small anchor set of points (possibly a subset of the original data points). They then exploit affinities between the whole data set and the anchor set in order to reduce the matrix dimension or to sparsify the affinity matrix.
Such methods include using Nystr\"{o}m approximations, \cite{Nystrom}, and $k$-means based approximate spectral clustering, \cite{KASP}. In \cite{LSC} the authors exploit affinities between anchor points and the original dataset in order to construct a sparsified $n\times n$ affinity matrix.
Since the resulting Laplacian is also sparse,
its spectrum may be computed efficiently.
This LSC algorithm has computation time $O(ndm)$,
where
$m$ is assumed to be fixed and much smaller than $n$,
and hence time complexity is linear in $n$.
 Anchor-based ideas have also been investigated in topological data analysis, particularly in persistent homology, see \cite{witness}. There, affinities between a small set of landmark points and the original data points are exploited in order to build a \textit{witness} complex, to reduce computation time for persistence diagrams.

Despite the promising behavior of anchor-based
spectral clustering methods in practice, to our knowledge there is no complete theory to guarantee their consistency.
Hence, in this work we thoroughly specify a
representative
anchor-based approach, based on random subsampling and nearest neighbor  affinity, and rigorously analyse
its accuracy.
Our key result is Theorem~\ref{consistency anchor spect clust},
showing consistency of the algorithm.
For comparison, we also derive an analogous result in the same setting,
Theorem~\ref{consistency spect clust},
for full spectral clustering.
A key practical insight from our analysis is that
the number of nearest neighbors should be chosen differently,
as a function of $n$,
for the full and anchor-based cases.
Practical computation on synthetic data sets illustrates the
relevance of this scaling information.
We also show that the anchor-based algorithm analyzed here
performs effectively on the large scale MNIST and PenDigits
data sets in comparison with full spectral clustering and LSC.

\section{The AnchorNN Algorithm}\label{sec: anchorNN algo}

Given a vertex set, $X_n:=\{x_1,\dots,x_n\}$, consisting of $n$ points in $\R^d$,
a graph Laplacian may be constructed via an affinity matrix, induced by a geometric graph. There are various popular choices for the geometric graph. The standard (unweighted) geometric graph, studied by Penrose in \cite{randGraphs}, has edges given by $\{\1(|x_i-x_j|<r_n)\ |\ i\neq j\}$, where the bandwidth parameter $r_n>0$ may be allowed to decrease to $0$ as a function of $n$.
Here $| \cdot |$ denotes the Euclidean norm.
A closely related alternative is given by a
$K$ nearest neighbor ($K$NN) graph. Various constructions exist. Here, we connect two vertices by an edge if one of the two vertices is among the $K$ nearest neighbors of the other. The number of nearest neighbors, $K$, is allowed to increase as a function of $n$. We denote by $G(X_n,K)$ the $K$NN graph thus constructed, and let $W$ be the affinity matrix of $G(X_n,K)$; that is,
\[
W_{i,j}:=\1\left(x_i\in N_K(x_j,X_n) \text{~or~}  x_j\in N_K(x_i,X_n)\right),\ i\neq j,
\]
where $N_K(x,X_n)$ denotes the set of $K$ nearest neighbors of $x$ from $X_n$.
The corresponding unnormalized graph Laplacian is defined as
\[
\Delta^{(n)}:=D-W, \quad \text{ where } D :=  \mathrm{diag}
\left(
\sum_{j=1}^nW_{i,j}
\right).
\]
Normalized alternatives of the graph Laplacian may also be used: the random walk Laplacian is given by $D^{-1}\Delta^{(n)}$ and the symmetric Laplacian is given by $D^{-1/2}\Delta^{(n)}D^{-1/2}$.

These versions of the graph Laplacian have been widely studied and their spectra are known to contain important information about the underlying graph. In particular, each has non-negative eigenvalues
with smallest eigenvalue equal to $0$, and the multiplicity of the smallest eigenvalue is equal to the number of connected components of the graph
\cite{Chung1997,spec_hkk,spectralClusteringTutorial}.
Moreover, in the more realistic case where
there are
``approximate'' clusters, rather than fully
disconnected
components,
this structure
can be revealed via ``smooth step functions'' in the
Laplacian eigenvector components
\cite{DXZ01}.
Spectral clustering
algorithms aim to exploit this information.

\begin{algorithm}\label{spect clust algo}
\caption{Full spectral clustering algorithm}

\SetKwInOut{Input}{input}\SetKwInOut{Output}{output}

\Input{$K$, $k$, $X_n$}
\BlankLine

\Output{$C_1^n,\dots,C_k^n$: partition of $X_n$ into $k$ clusters}
\BlankLine

  $W\leftarrow$ affinity $n\times n$ matrix from $X_n$ given by $W_{i,j}:=\1\left(x_i\in N^{(K)}(x_j) \text{ or } x_j\in N^{(K)}(x_i)\right)$\;
  $\Delta^{(n)}\leftarrow$ graph Laplacian from $W$\;
  $u_1,\dots,u_k\leftarrow$ $k$ eigenvectors associated to the $k$ smallest eigenvalues of $\Delta^{(n)}$\;
  $Z_n:=\{z_1,\dots,z_n\}\subset \R^k\leftarrow$ $z_i$ is the $i$th column of $[u_1|\dots |u_k]^T$\;
  $A_1^n,\dots,A_k^n\leftarrow$ partition of $Z_n$ from $k$-means clustering\;
  $ C_1^n,\dots, C_k^n\leftarrow$ partition of $X_n$ given by $C_i^n:=\{x_j\ |\ z_j\in A_i^n\}$\;

\end{algorithm}

Algorithm~\ref{spect clust algo} describes a standard spectral clustering algorithm. Here, the $n$ data points are embedded into a lower dimensional space, using the eigenvectors of the graph Laplacian, then $k$-means clustering is performed on the embedded points.

\begin{algorithm}\label{anchorNN algo}

\SetKwInOut{Input}{input}\SetKwInOut{Output}{output}

\Input{$m$, $K$, $k$, $X_n$}
\BlankLine

\Output{$C_1^m,\dots,C_k^m$: partition of $X_n$ into $k$ clusters}
\BlankLine

  pick uniformly at random $Y_m\subset X_n$\;
  $W\leftarrow$ affinity $m\times m$ matrix from $Y_m$ given by $W_{i,j}:=\1\left(y_i\in N^{(K)}(y_j) \text{ or } y_j\in N^{(K)}(y_i)\right)$\;
  $\Delta^{(m)}\leftarrow$ graph Laplacian from $W$\;
  $ C_1^m,\dots, C_k^m\leftarrow$ partition of $Y_m$ from k-means spectral clustering on $\Delta^{(m)}$\;
  \For{$z\in X_n\setminus Y_m$}{
  $y\leftarrow$ nearest neighbor of $z$ from $Y_m$\;
  $i\leftarrow$ label assigned to $y$\;
  $C_i^m\leftarrow C_i^m\cup \{z\}$\;
  }

 \caption{AnchorNN spectral clustering algorithm}
\end{algorithm}

In contrast, our anchor-based spectral clustering algorithm, AnchorNN, randomly selects a small anchor subset $Y_m\subset X_n$,
computes a partition of $Y_m$ by applying spectral clustering on the Laplacian $\Delta^{(m)}$, then assigns to the remaining points the label given by that of their nearest neighbor in $Y_m$.
 We summarize these steps in Algorithm~\ref{anchorNN algo}.

Randomly selecting the anchor subset in AnchorNN has a cost of $O(1)$.
We then compute a spectral clustering on these anchor points, which costs $O(m^3)$. Finally, we assign to each of the remaining data points the label attributed to its nearest anchor neighbor. This last step costs $O(ndm)$, so AnchorNN is linear in $n$. We note that our simple anchor-based spectral clustering approach is similar to the strategy investigated in \cite{KASP}.
A key difference is that we select the anchor points by random subsampling of the data points, rather than by using a $k$-means algorithm. This facilitates the analysis of the algorithm and allows us to rigorously establish consistency of an anchor-based spectral clustering algorithm for the first time (see Section~\ref{sec:analysis}). Moreover, we are able to provide sharp asymptotic conditions on the various parameters at play in the algorithm ($n$, $m$, $K$) in order to
guarantee consistency, and to compare them with similar sharp conditions for the full spectral clustering algorithm.

The issue of which version of the graph Laplacian should be used
for spectral clustering is discussed, for instance, in
\cite{spec_hkk}
and
\cite{consistencySpectralClustering}.
In \cite{consistencySpectralClustering} the authors argue that the normalized versions should yield better results; however it is assumed there that the bandwidth parameter of the underlying graph is fixed and independent of $n$. If the bandwidth parameter decays to $0$ as $n\to \infty$, then the results found in \cite{errorEstimates,variationalApproach} show that all versions of the graph Laplacian can be used to
 produce consistent full spectral clustering algorithms, with same error convergence.\\

 For concreteness, we focus here on the case of the unnormalized graph Laplacian, but we note that the consistency results in
 Section~\ref{sec:analysis} extend to the two normalized versions of the graph Laplacian, and in our computational experiments we found that
 normalization led to very similar performance of AnchorNN.




\section{Computational Tests}
\label{sec:comp}

Before analyzing the AnchorNN algorithm, we first
test its performance on synthetic and real data sets, and
compare it with full spectral clustering and a state-of-the-art
anchor-based method.
The tests on synthetic data also align closely with the
setting in which we analyze consistency, allowing us to draw on insights from the theory.

\subsection{Adjusted Rand Index}
In order to evaluate the quality of a computed partition of some data points, we choose the Adjusted Rand Index, which is a widely-used assessment of clustering accuracy. The Rand Index is a score in $[0,1]$ which can be interpreted as the probability of two different partitions agreeing on two random data points (\cite{ComparingPartitions,RandIndex}).
In our context we are able to compare a computed
partition with the ground truth, with a larger value indicating better performance.
The Adjusted Rand Index (ARI) is a corrected-for-chance version of the Rand Index, taking into account the potential variability of the number of clusters and of their sizes
\cite{RIvsARI}. See
Appendix~\ref{app: ARI} for a precise definition of the ARI.

\subsection{Synthetic Data}\label{sec:synthetic data}

We compared AnchorNN and full spectral clustering on the six classes
of test data from \cite{syntheticData},
which are illustrated in
Figure~\ref{fig:synth_datasets}.
Here, the data points in 2D are generated probabilistically
according to a known ground truth of nonoverlapping support domains.
These six classes provide a variety of shapes and patterns to test the robustness of clustering algorithms.

\begin{figure}[htp]
    \centering
    \includegraphics[width=15cm]{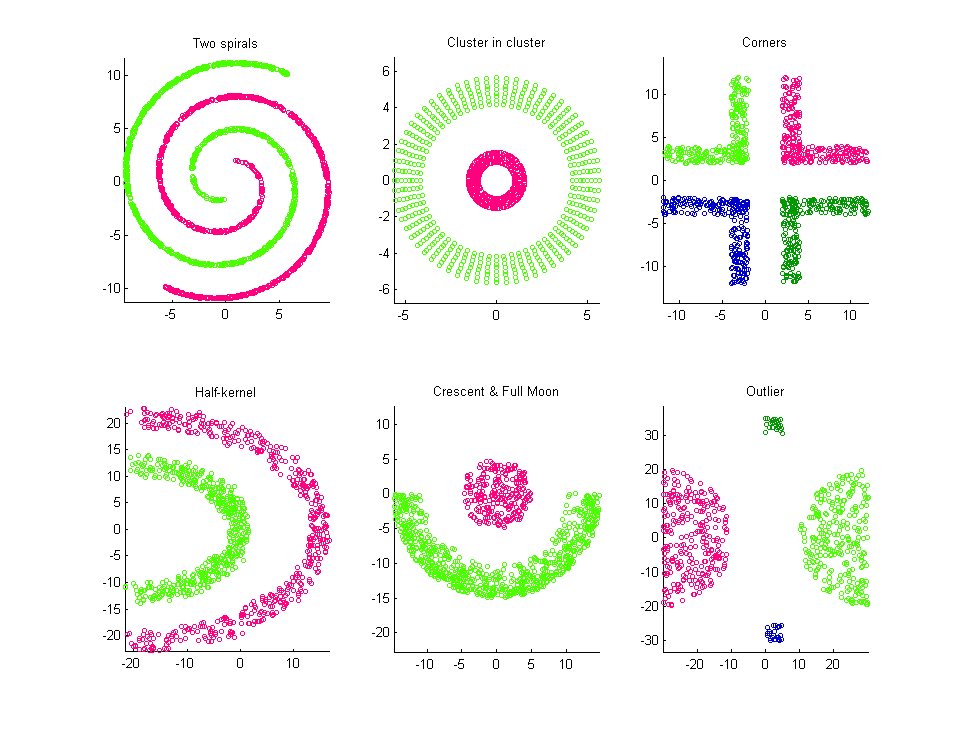}
    \caption{Instances of the six synthetic datasets from
    \cite{syntheticData}.
    }
    \label{fig:synth_datasets}
\end{figure}

We found that
for well-chosen values of $n$ and $K$, the full spectral clustering algorithm yields an ARI of $1$,
that is, full recovery of the known clusters, for all of the six cases. Similarly, for well-chosen $m \ll n$ and $K$, where we recall that
$m$ is the size of the anchor subset, AnchorNN also yields an ARI of $1$.
It is well known that the issue of choosing appropriate values for the parameters $m$ and $K$ is delicate and
data set dependent. It is typically left to the practitioner to fix arbitrary values for $K$ and $m$ without any motivating intuition other than previously reported empirical results (e.g., \cite{LSC,KASP}).
Although not providing a complete answer,
our theoretical results give new insights on how to choose $K$ and $m$
in order to achieve consistency, and how $n$, $m$ and $K$
should relate to each other. In particular,
Theorems~\ref{consistency spect clust} and~\ref{consistency anchor spect clust}
suggest that
full spectral clustering
and
AnchorNN
should be used in different parameter regimes:
AnchorNN will perform better than standard spectral clustering
for small values of $K$, and vice versa for larger values of $K$.
We found that this effect was particularly prominent for the
`Cluster in cluster' data set. We ran both clustering methods on $20$ independent instances of the data set, fixing $n=2000$ and $m=200$, and computed the average ARI. The results confirm that smaller values of $K$ favor AnchorNN and that larger values of $K$ favor spectral clustering;
see Table~\ref{tab: clusterincluster}. Intuitively, for a small value of $K$ and many sampled points, the graph will tend to make local connections and capture local features only, while for a larger value of $K$ or for fewer sampled points, the $K$NN graph will capture more global features. With the `Cluster in cluster' data set, we see
in Figure~\ref{fig:synth_datasets} that a more local scale
exists, made up of small subclusters along the rays composing the main outer cluster, while a more global scale will focus on the the two main clusters.

 \begin{table}[ht]

 \caption{`Cluster in cluster' dataset: ARI comparison for different values of $K$}
 \label{tab: clusterincluster}

\centering
\begin{tabular}[t]{lcc}
\hline
& spectral clustering ($n = 2000$) & AnchorNN ($m = 200$)\\
\hline
$K$ = 8 & 0.31 & 1\\
$K$ = 15 & 1 & 1\\
$K$ = 23 & 1 & 0.22\\
\hline
\end{tabular}

\label{tab:synthetic datasets}
\end{table}







\subsection{Real Data}



We now present results on two standard data sets of handwritten digits,
PenDigits
\cite{96pendigits} and
MNIST \cite{2010mnist}.
PenDigits has $10992$ points of dimension $16$.
MNIST has $70000$ data points of dimension $784$.

Our tests included the LSC algorithm
\cite{LSC}, using an implementation
downloaded from the second author's webpage.
We note that LSC is a state-of-the-art anchor-based algorithm, which
currently is not supported by a consistency result.

Figure~\ref{fig:aris}
shows accuracy results
for full spectral clustering, LSC and AnchorNN,
with number of nearest neighbors $K=7$ and $K=15$.
Here, for each fixed number of anchor points, $m$,
the ARI score is averaged over 20 runs.
Results for full spectral clustering are, of course,
independent of $m$.
The upper two plots correspond to the PenDigits data.
We observe the same phenomenon that was illustrated in
Table~\ref{tab: clusterincluster}, and is explained by our theoretical results in Theorems~\ref{consistency spect clust} and~\ref{consistency anchor spect clust}. With PenDigits for $K=7$, AnchorNN outperforms spectral clustering for $m\in [1000,5000]$ with a peak at around $m=3000$, while for $m>5000$ the performance of AnchorNN degrades and becomes comparable to that of full spectral clustering. On the other hand, for PenDigits with $K=15$  full spectral clustering performs better and its ARI serves as a horizontal asymptotic for anchorNN's ARI, as $m$ increases and approaches $n$.

\begin{figure}[htp]
    \centering
    \includegraphics[width=15cm
    ]{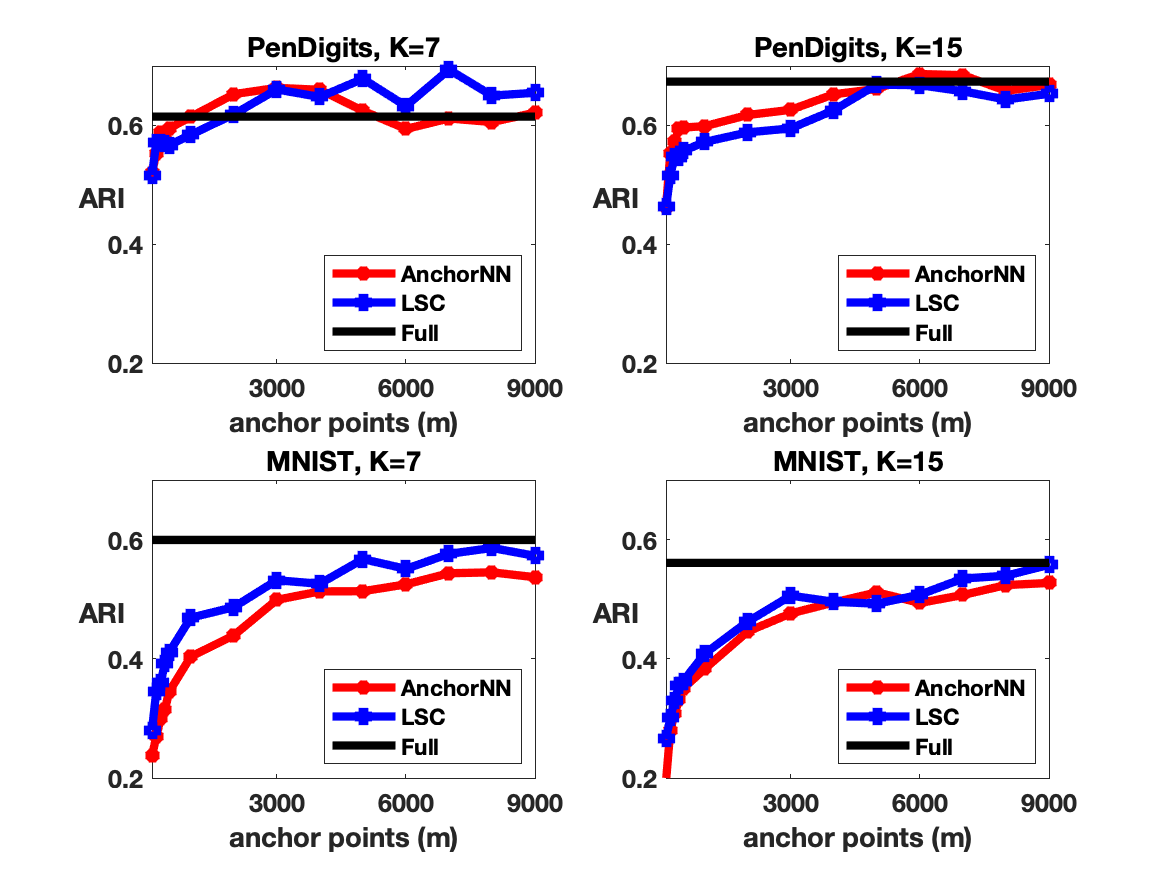}
    \caption{Adjusted Rand Index
    scores (larger is better) for
    full spectral clustering, LSC and AnchorNN
    on PenDigits (upper) and MNIST (lower,)
    using $K=7$ nearest neighbors (left) and
    $K=15$ nearest neighbors (right).
    Horizontal axis gives the number of anchor points,
     $m$, used in each test.
    }
    \label{fig:aris}
\end{figure}

In Figure~\ref{fig:aris} we see that AnchorNN and LSC perform similarly on PenDigits,
with LSC appearing more robust to the
number of nearest neighbors, $K$. Note that for
LSC, $K$ represents the number of nearest neighbors from the anchor subset for a data point in $X_n$, and the final $n\times n$ affinity matrix will have more than $K$ entries per row (see \cite{LSC} for more details on the algorithm). For $K=7$, LSC outperforms the other
two methods for $m>5000$; however this is of limited interest since
at these  large
values of $m$ the time complexity advantage of anchor-based techniques is lost;
see Figure~\ref{fig:times} below.

The lower two plots in
Figure~\ref{fig:aris} show results for MNIST.
 Here, full spectral clustering performs best, with LSC performing slightly better than AnchorNN and
 the ARI of both anchor-based techniques tending to the ARI of full spectral clustering as $m$ increases.

As noted in Section~\ref{sec:int}, both LSC and AnchorNN are linear in $n$.
Figure~\ref{fig:times} shows the computation times
for the results in Figure~\ref{fig:aris}.
As expected,
for small enough $m$
the anchor-based methods can be significantly faster than full spectral clustering. This is especially noticeable with MNIST, which has the larger value of $n$, where full spectral clustering can be several orders of magnitude
slower
than the anchor-based methods.
We also see that
although LSC and AnchorNN
recorded similar computation times,
AnchorNN was consistently faster by a roughly  constant factor.

\begin{figure}[htp]
    \centering
    \includegraphics[width=15cm
    ]{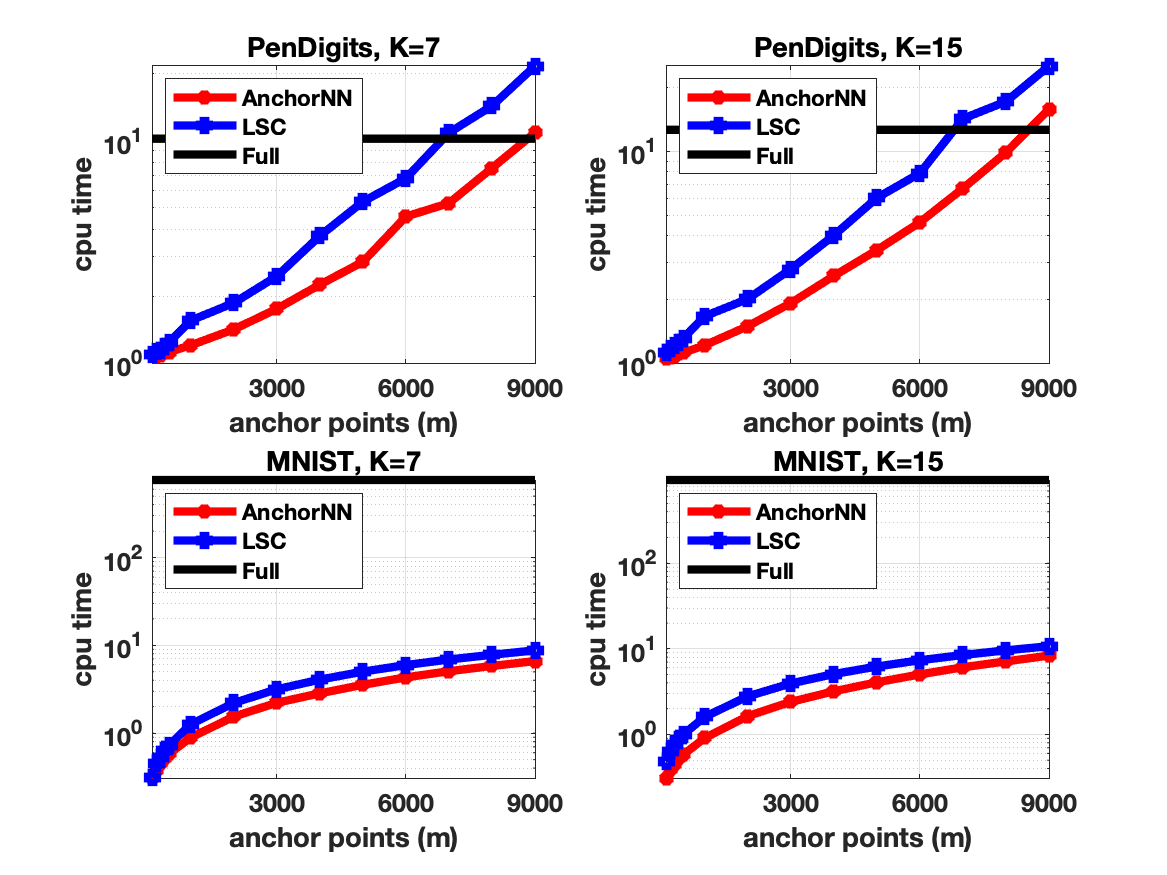}
    \caption{Computing time for
    full spectral clustering, LSC and AnchorNN
    on PenDigits (upper) and MNIST (lower),
    corresponding to the
    results shown in Figure~\ref{fig:aris}.
    }
    \label{fig:times}
\end{figure}

\section{Consistency of AnchorNN spectral clustering}
\label{sec:analysis}

Having established the effectiveness of the
AnchorNN method, we now show that it has a desirable
theoretical consistency property.
Following standard
practice
for analysing spectral clustering methods, we consider
the idealized case where the data is generated from
an underlying sampling set composed of disjoint clusters mutually separated by a positive distance. (If the underlying clusters were overlapping, for instance as in a Gaussian mixture model, then there would not be a single, unambiguous
definition of the correct underlying limiting partition.)
The data sets in Figure~\ref{fig:synth_datasets}
illustrate this scenario.

\subsection{Setting for Consistency Analysis}
We will use the shorthand notation
$
[x]:=[1,x]\cap \mathbb N
$.
Let $d\geq 2$, and let $\nu$ be a probability distribution with sampling density $q\in C^1$ supported on an open bounded set $\Omega\subset \R^d$ with Lipschitz boundary. Let $X_n:=\{x_1,\dots,x_n\}$ be an i.i.d. sample of $n$ random points with respect to $\nu$. Suppose furthermore that $q$ has the classic property
\[
0<q_{\min}\leq q_{\max}<\infty,
\]
where
\[
q_{\min}:=\inf\{q(x)\ |\ x\in \Omega\}
\text{~~and~~}
q_{\max}:=\sup\{q(x)\ |\ x\in \Omega\}.
\]

\begin{defn}
Let $\mathbb P$ be a probability measure and let $(A_n)_{n\in \N}$ be a sequence of  $\mathbb P$-measurable events. We say that $(A_n)_{n\in \N}$ is true \textbf{with high probability}, abbreviated \textbf{w.h.p.}, if
$$
\lim_{n\to \infty}\mathbb P(A_n) = 1.
$$
\end{defn}
With a standard abuse of notation, we will say that $A_n$ is true w.h.p., instead of $(A_n)_{n\in \N}$.\\

\begin{defn}
Let $\mathbb P$ be a probability measure and let $A$ be a $\mathbb P$-measurable event. We say that $A$ is true \textbf{almost surely}, abbreviated \textbf{a.s.}, if
$
\mathbb P(A) = 1
$.
\end{defn}

\begin{defn}
Given a probability measure $\mu$ and a measurable event $A$ such that $\mu(A)>0$, let $\mu|_A$ be the induced probability measure restricted to $A$, given by
$$
\mu|_A(B):=\frac{\mu(B\cap A)}{\mu(A)}.
$$
\end{defn}
Motivated by \cite{variationalApproach}, we define \textit{consistency} as follows.
\begin{defn}[Consistency]\label{defn: consistency}

We say that a sequence of partitions $(\{C_i^n\ |\ i\in [k]\})_{n\in \N}$ of the vertices $(X_n)_{n\in \N}$ is \textbf{consistent}, if there exists a partition $\{C_1,\dots,C_k\}$ of $\Omega$ such that with probability one, up to a subsequence, the following weak convergence of measures holds
$$
\forall\ i\in [k],\ \nu_n|_{C_i^{(n)}}\rightharpoonup \nu|_{C_i},
$$
where
$$
\nu_n:=\frac{1}{\abs{n}}\sum_{x\in X_n}\delta_{x}
$$
denotes the empirical measure induced by $X_n$.
By an abuse of notation, we may say that a partition $\{C_1^n,\dots,C_k^n\}$ is consistent, instead of the sequence $(\{C_i^n\ |\ i\in [k]\})_{n\in \N}$.

Given an algorithm which outputs, for every $n\in \N$, a partition of some points $X_n$ (e.g., Algorithm~\ref{spect clust algo} or Algorithm~\ref{anchorNN algo}), we say that the algorithm is \textbf{consistent} if the sequence of partitions indexed by $n$, obtained in the output, is consistent.
\end{defn}

\subsection{Main Result}

Suppose now that the sampling density $q$ has support given by $\Omega:=\cup_{i\in [k]} C_i$, where
\begin{equation}
\delta:=\min\{\dist(C_i,C_i)\ |\ i\neq j\}>0.
\label{eq:delta}
\end{equation}
Our main theoretical result, showing consistency of the AnchorNN algorithm, appears in
Theorem~\ref{consistency anchor spect clust}.
For comparison,
we first prove the following result concerning the
full spectral clustering algorithm.
\begin{thm}[Consistency of full spectral clustering]\label{consistency spect clust}
There exists $C>0$ depending on $\Omega$ and the dimension $d$, such that if $K\in \N$ satisfies $K=o(n)$ and
$$
K\geq C\log n,
$$
then Algorithm~\ref{spect clust algo} is consistent.
\end{thm}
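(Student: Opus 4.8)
The plan is to reduce consistency of the $K$NN-based spectral clustering algorithm to two facts: first, that w.h.p. the $K$NN graph $G(X_n,K)$ has exactly $k$ connected components, namely the restrictions $X_n\cap C_1,\dots,X_n\cap C_k$; and second, that once the graph decomposes into these $k$ components, the subsequent spectral embedding and $k$-means step correctly recover the partition. Granting the first fact, the second is essentially classical: when the graph has exactly $k$ components, the eigenspace of $\Delta^{(n)}$ associated with the eigenvalue $0$ is $k$-dimensional and spanned by the indicator vectors of the components. Hence the embedded points $z_j\in\R^k$ take only $k$ distinct values, one per component, and these $k$ values are pairwise distinct (the Gram matrix of the cluster indicators is diagonal with strictly positive entries, so no two clusters map to the same point). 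Any exact $k$-means solver (or one achieving the optimal cost, which is zero here) therefore groups the $z_j$ precisely according to membership in the $X_n\cap C_i$, and pulling back gives $C_i^n=X_n\cap C_i$. Weak convergence $\nu_n|_{C_i^n}\rightharpoonup \nu|_{C_i}$ then follows from the strong law of large numbers / Glivenko–Cantelli applied to $\nu$ restricted to each $C_i$, since $\nu(C_i)>0$ by $q\geq q_{\min}>0$; this gives consistency even along the full sequence, not merely a subsequence.

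So the real content is the connectivity statement, and this is where the hypothesis $K\geq C\log n$, $K=o(n)$ enters. I would split it into two halves. (i) \emph{No edges across clusters:} I must show that w.h.p., for every $i\neq j$ and every $x\in X_n\cap C_i$, none of the $K$ nearest neighbors of $x$ lies in $C_j$. Since $\dist(C_i,C_j)\geq\delta>0$, it suffices that the $K$-th nearest neighbor distance of every point is eventually smaller than $\delta$; equivalently, that every ball $B(x,\delta/2)$ contains at least $K+1$ sample points. For a fixed $x\in\Omega$ the number of samples in $B(x,\delta/2)\cap\Omega$ is Binomial with mean $\gtrsim n$ (using $q\geq q_{\min}$ and that $\Omega$ has Lipschitz boundary, so $B(x,\delta/2)\cap\Omega$ has volume bounded below uniformly — here one may need to shrink $\delta/2$ or use interior cone conditions near the boundary); a Chernoff bound makes the failure probability exponentially small in $n$, and a union bound over the $n$ points still goes to $0$. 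Since $K=o(n)$, $K+1$ is well below this Binomial mean, so this half is comfortable and in fact only needs $K=o(n)$.

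(ii) \emph{Each cluster is internally connected:} this is the delicate half and the main obstacle, and it is exactly where $K\geq C\log n$ is needed and sharp. Here I would invoke the known connectivity threshold for $K$NN random geometric graphs on a domain with the stated regularity: there is a constant (depending on $\Omega$, $d$, and the density bounds) such that if $K\geq C\log n$ then each $G(X_n\cap C_i,K)$ is connected w.h.p. — this is the $K$NN analogue of Penrose-type connectivity results, adapted to a bounded Lipschitz domain with density bounded away from $0$ and $\infty$; the constant $C$ can be taken to depend only on $\Omega$ and $d$ by absorbing $q_{\min},q_{\max}$ (which are themselves determined by $q$, hence by $\Omega$ in the sense of the hypothesis). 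The standard argument covers $C_i$ by $O(n)$ small balls of radius comparable to the typical $K$NN distance $(\!K/n)^{1/d}$, shows each such ball is nonempty w.h.p. via a Chernoff bound (this is where the $\log n$ budget is spent, through the union bound over $\Theta(n)$ balls), and then shows points in overlapping or adjacent balls are $K$NN-connected because a ball of that radius contains fewer than $K$ points — so a point "sees" all its geometric neighbors. Care is again required near $\partial C_i$, where one uses the Lipschitz / cone condition to guarantee the local volume lower bounds. Combining (i) and (ii), the $K$NN graph has exactly the $k$ components $X_n\cap C_i$ w.h.p., and the reduction above finishes the proof.
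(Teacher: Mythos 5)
Your proposal is correct and follows essentially the same route as the paper: exclude cross-cluster edges by showing the $K$-th nearest-neighbor distance is eventually below $\delta$ (the paper does this with a vanishing radius $r\sim (K/n)^{1/d}$ via a Chernoff/ball-counting lemma, you with fixed radius $\delta/2$, which is equally valid since $K=o(n)$), invoke the known $K$NN connectivity threshold $K\geq C\log n$ on each cluster $C_i$, and then conclude via the block-diagonal Laplacian whose kernel is spanned by the cluster indicators, exact recovery by $k$-means, and the law of large numbers for the weak convergence. No gaps beyond the level of detail the paper itself leaves to the cited connectivity result.
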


The type of consistency result
in Theorem~\ref{consistency spect clust}
for full spectral clustering has already been
established for random geometric graphs in several works, see, for instance,  \cite{errorEstimates,variationalApproach,consistencySpectralClustering},
typically under the slightly
less restrictive assumption that the clusters
are not overlapping but connected.
In \cite{improvedKNNspect} the authors also investigate the related $K$NN graph model, where spectral clustering is shown to be consistent if $K=o(n)$ and $K=\omega(\log n)$. These bounds are sharp, in the sense that the connectivity threshold of the underlying $K$NN graph occurs a.s. at $\Theta(\log n)$; see \cite{kNNconnectivity}.\\


In Theorem~\ref{consistency spect clust}
we use the stronger assumption
(\ref{eq:delta}); i.e.,
that the clusters are separated by a positive distance.
Our proof
of Theorem~\ref{consistency spect clust} relies on the observation that, under the assumption (\ref{eq:delta}), the smallest eigenvalue of the graph Laplacian is $0$ with multiplicity $k$, and the associated eigenvectors are given by $\1_{C_i^{(n)}}$, $i\in [k]$, where $C_i^{(n)}=C_i\cap X_n$ and $C_i$ is a cluster.
The introduction of
a positive $\delta$ in
(\ref{eq:delta})
has four advantages.
First,
it allows for a more straightforward proof than those given in the works mentioned above.
Second,
the assumption
allows us to improve the lower bound condition
 $K=\omega(\log n)$ mentioned above to
$K=\Omega(\log n)$.
Third,
and most importantly,
the resulting
analysis can be extended
readily to the
anchor-based case, leading to
Theorem~\ref{consistency anchor spect clust}.
A fourth advantage, which is discussed
in Appendix~\ref{app:further}, is that
the assumption actually implies a stronger, non-asymptotic, type of consistency result.

\begin{thm}[Consistency of AnchorNN]\label{consistency anchor spect clust}
Let $m\to \infty$ as $n\to \infty$. There exists $C>0$ depending on $\Omega$ and $d$, such that if $K\in \N$
satisfies $K=o(m)$ and
$$
K\geq C\log m,
$$
then AnchorNN (i.e., Algorithm~\ref{anchorNN algo}) is consistent.
\end{thm}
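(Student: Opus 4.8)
The plan is to reduce Theorem~\ref{consistency anchor spect clust} to Theorem~\ref{consistency spect clust}, using the elementary observation that a uniformly random subset of an i.i.d.\ sample is again an i.i.d.\ sample. Since $X_n$ is i.i.d.\ with respect to $\nu$ and $Y_m$ is drawn uniformly at random from $X_n$, exchangeability shows that $(Y_m,\,X_n\setminus Y_m)$ has the same law as $(\{x_1,\dots,x_m\},\,\{x_{m+1},\dots,x_n\})$ with the $x_j$ i.i.d.; hence we may assume without loss of generality that $Y_m$ is an i.i.d.\ sample of size $m$ from $\nu$, that $X_n\setminus Y_m$ is an i.i.d.\ sample of size $n-m$ from $\nu$, and that the two are independent. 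In particular the spectral-clustering step of AnchorNN (the line producing $C_1^m,\dots,C_k^m$) is exactly Algorithm~\ref{spect clust algo} run on an i.i.d.\ sample of size $m$ with the same value of $K$, and the hypotheses $K=o(m)$ and $K\geq C\log m$ coincide with the hypotheses of Theorem~\ref{consistency spect clust} after the substitution $n\mapsto m$.

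I would then quote not merely the statement but the proof of Theorem~\ref{consistency spect clust}: under the separation assumption~(\ref{eq:delta}), that argument shows that, w.h.p., the $K$NN graph on the sample has exactly $k$ connected components, namely the sets $C_i\cap(\text{sample})$, so the null space of the Laplacian is spanned by the indicators $\1_{C_i\cap(\text{sample})}$ and $k$-means on the spectral embedding returns precisely the partition into the sets $C_i\cap(\text{sample})$, up to relabeling. Applied to $Y_m$, this yields a permutation $\sigma$ (possibly depending on $n$) such that, w.h.p., $C_i^m = C_{\sigma(i)}\cap Y_m$ for every $i\in[k]$; in particular $C_i\cap Y_m$ is nonempty and, by a binomial concentration argument, has cardinality tending to infinity.

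It remains to show that the nearest-neighbour assignment loop is error-free w.h.p., which I would do via a covering-radius estimate for $Y_m$. Because $\Omega$ is fixed, bounded, and has Lipschitz boundary, and $q\geq q_{\min}>0$, there is a finite $(\delta/2)$-net $\{w_1,\dots,w_N\}$ of $\overline{\Omega}$ with $p_0:=\min_{1\leq\ell\leq N}\nu\!\left(B(w_\ell,\delta/2)\right)>0$; since $Y_m$ is i.i.d.\ of size $m$, a union bound gives $\PP\!\left(\exists\,\ell:\ Y_m\cap B(w_\ell,\delta/2)=\emptyset\right)\leq N(1-p_0)^m\to 0$, so w.h.p.\ $\dist(z,Y_m)<\delta$ for every $z\in\Omega$, and in particular for every $z\in X_n\setminus Y_m$. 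On this event, if $z\in C_i$ then its nearest neighbour in $Y_m$ lies within distance $\delta$ of $z$ and therefore cannot belong to any $C_j$ with $j\neq i$ (those are at distance at least $\delta$ from $z$ by~(\ref{eq:delta})), so it lies in $C_i\cap Y_m = C^m_{\sigma^{-1}(i)}$ and $z$ is added to that block. Intersecting with the event of the previous paragraph, w.h.p.\ the final partition $C_1^m,\dots,C_k^m$ returned by AnchorNN coincides, as an unordered partition, with $\{C_i\cap X_n : i\in[k]\}$.

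Finally, consistency follows exactly as in the proof of Theorem~\ref{consistency spect clust}: the sets $C_1,\dots,C_k$ are open with $\partial C_i\subseteq\partial\Omega$ and $\nu(\partial\Omega)=0$, so the strong law of large numbers and the portmanteau theorem give $\nu_n|_{C_i\cap X_n}\rightharpoonup\nu|_{C_i}$ almost surely, which combined with the exact-recovery statement above (and, if one wants to track labels rather than work with unordered partitions, passing to a subsequence and relabelling the finitely many blocks) yields Definition~\ref{defn: consistency} with limiting partition $\{C_1,\dots,C_k\}$. The work here is concentrated not in any individual estimate---the subsampling identity, the binomial concentration, and the covering-radius bound are all standard---but in setting up the reduction so that the \emph{proof} of Theorem~\ref{consistency spect clust} can be invoked verbatim for $Y_m$, and in threading the covering bound through the assignment loop; this last point, ensuring that every leftover data point is close enough to $Y_m$ that its nearest anchor neighbour lies in the correct cluster, is the only place where a genuinely new (though routine) argument is needed, and I expect it to be the main technical step.
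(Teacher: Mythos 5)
Your proposal is correct and follows the same overall reduction as the paper: invoke the \emph{proof} (not merely the statement) of Theorem~\ref{consistency spect clust} on the anchor set $Y_m$, so that the anchor-level spectral step recovers $\{C_i\cap Y_m\}$ exactly, then show the nearest-neighbour assignment loop makes no errors, and conclude by the law of large numbers. Where you genuinely diverge is the assignment step. The paper controls the covering radius of $Y_m$ by reusing Lemma~\ref{random covering lemma} with the radius $r$ tied to $K$ through $\frac{m}{2}q_{\min}\omega_d r^d=K$, so that $K\geq C\log m$ gives, a.s.\ for $n$ large, $C_i\subset \cup_{y\in C_i\cap Y_m}B(y,r)$, and $K=o(m)$ gives $r=o(1)<\delta$; you instead cover $\overline{\Omega}$ by a fixed finite $(\delta/2)$-net and use the union bound $N(1-p_0)^m\to 0$, which needs only $m\to\infty$ and no condition on $K$ at all. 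Your variant is more elementary and makes transparent that the hypotheses on $K$ are needed only for the spectral step on the anchors, and that the covering radius need only beat the constant $\delta$ rather than vanish. What the paper's route buys is an almost-sure, eventually-in-$n$ covering statement via Borel--Cantelli, matching the a.s.\ phrasing of its proofs and the non-asymptotic remark in Appendix~\ref{app:further}, whereas your union bound yields only a w.h.p.\ statement; this still suffices for Definition~\ref{defn: consistency} thanks to its ``up to a subsequence'' clause (pass to a deterministic subsequence along which the failure probabilities are summable), and you correctly flag the subsequence/relabelling step needed for that. You are also more explicit than the paper on two points it leaves implicit: that $Y_m$ is itself an i.i.d.\ sample from $\nu$ (your exchangeability remark) and the treatment of the labelling permutation produced by the $k$-means step.
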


This result is, to our knowledge, the first proof of consistency for an anchor-based spectral clustering method. We obtain sharp conditions on the number of nearest neighbors, $K$, as a function of the number of anchor points, $m$, to guarantee the consistency of AnchorNN.
In particular, we show that it is sufficient to have  logarithmic dependency on $m$ in the lower bound on $K$.
The theorem provides a theoretical guarantee that, under appropriate scaling for $K$, our AnchorNN algorithm can
match the recovery property of full spectral clustering, with a much lower time complexity: $O(ndm)$ instead of $O(n^3+n^2d)$, as discussed in Section~\ref{sec: anchorNN algo} and illustrated in Figure~\ref{fig:times}.

\section{Summary}
Anchor-based techniques can dramatically improve the
empirical
performance of spectral clustering, extending the scale at which it can be applied.
The main aim of this work is to add theoretical support
by showing, for the first time, that
an anchor-based method can enjoy the same
consistency property as the full method.
The algorithm that we specify and analyze,
AnchorNN (Algorithm~\ref{anchorNN algo}),
was also found to be
competitive in practice, and the
convergence theory
that we developed helped to shed light on the
practical performance.

The Appendix below contains proofs of
Theorem~\ref{consistency spect clust} and
\ref{consistency anchor spect clust}.

\newpage
\bibliographystyle{plain}
\bibliography{refs}
\newpage
\begin{appendices}
\section{Adjusted Rand Index}\label{app: ARI}
Given a pair of data points and two (possibly) different partitions of the data points, we say that the pair is \textbf{agreeing} if they are in the same cluster for both partitions or in a different cluster for both partitions;
otherwise we say that the pair is \textbf{disagreeing}.
The Rand Index is the ratio of the number of agreeing pairs to the total number of pairs of points
\cite{ComparingPartitions,RandIndex}.
This yields a score in $[0,1]$ which can be interpreted as the probability that the two different partitions will agree on two random points. The Adjusted Rand Index (ARI),
\cite{RIvsARI},
is a corrected-for-chance version of the above Rand Index, which aims to take into account the variability of the number of clusters and of the sizes of those clusters. Given two partitions $\mathcal C:=\{C_1,\dots,C_r\}$ and $\mathcal C':=\{C_1',\dots,C_s'\}$, for every $(i,j)\in [r]\times [s]$ let
$$
n_{i,j}:=\abs{C_i\cap C_j'},
$$
and let
$$
a_i:=\sum_{j=1}^s n_{i,j}
$$
and
$$
b_j:=\sum_{i=1}^r n_{i,j}.
$$
The ARI between $\mathcal C$ and $\mathcal C'$ is then defined as
$$
ARI(\mathcal C,\mathcal C'):=\frac{\sum_{i,j}{n_{i,j}\choose 2}-\big[\sum_{i,j}{a_i\choose 2}{b_j\choose 2}\big]/{n\choose 2}}{(1/2)\big[\sum_i{a_i\choose 2}+\sum_j{b_j\choose 2}\big]-\big[\sum_{i,j}{a_i\choose 2}{b_j\choose 2}\big]/{n\choose 2}}.
$$

\section{Proofs of Theorems~\ref{consistency spect clust} and~\ref{consistency anchor spect clust}}\label{app: proofs}
Our proofs of Theorems~\ref{consistency spect clust} and~\ref{consistency anchor spect clust}
rely on concentration and random covering results.\\

\subsection{Preliminary results}
First recall some classic Chernoff-type bounds (see for instance Lemma $1.1$ in \cite{randGraphs}) for binomial random variables.
Let $n\in \N$, let $p\in [0,1]$, and let
$X\sim \textrm{Bi}(n,p)$ be a binomial random variable with parameters $n$ and $p$. For $x>0$ let
\begin{equation}\label{H defn}
H(x):=1-x+x\log x,
\end{equation}
and set $H(0):=1$.
We shall need the following Chernoff-type bounds
\begin{equation}\label{Chernoff bounds}
\begin{cases}
    \mathbb P\left(X\geq k\right)&\leq \exp\left(-np H\left(\frac{k}{np}\right)\right)\text{, }
     \qquad k\geq np,\\
    \mathbb P\left(X\leq k\right)&\leq \exp\left(-npH\left(\frac{k}{np}\right)\right)\text{, }
     \qquad k\leq np.
    \end{cases}
\end{equation}

We now proceed via a sequence of lemmas. We
first derive a result about random coverings of bounded subsets of $\R^d$. What is the minimum value of $r$ such that w.h.p.
$$
\Omega\subset \cup_{x\in X_n}B(x,r)?
$$
Several papers have investigated this question (e.g., \cite{randCoverings,sphereCoverage,randomCirclesonSphere,vanishHomo,Wthesis}). For our purposes, we need lower bound conditions on $r_n$ such that this random covering occurs a.s.\ for all sufficiently large $n$.

\begin{lemma}\label{random covering lemma}
There exists $C>0$ depending on $\Omega$ and $d$, such that if
$$
nq_{\min} r_n^d \geq C\log n,
$$
then a.s., there exists  $n_0\in \N$, such that for all $n\geq n_0$
$$
\Omega \subset \cup_{x\in X_n}B(x,r_n).
$$
\end{lemma}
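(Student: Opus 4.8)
The plan is to reduce the (random) covering statement to a union bound over a \emph{deterministic} $(r_n/2)$-net of $\Omega$, to control the probability that a single net ball is missed using a Chernoff estimate, and then to upgrade ``w.h.p.'' to ``a.s.\ eventually'' via Borel--Cantelli.

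First I would record the geometric input supplied by the Lipschitz boundary: there exist constants $c_\Omega>0$ and $r_0>0$, depending only on $\Omega$ and $d$, such that
\[
\mathrm{Leb}\big(B(x,r)\cap\Omega\big)\ \geq\ c_\Omega\, r^d\qquad\text{for all }x\in\Omega\text{ and all }r\leq r_0 .
\]
This is the standard interior-cone / measure-density consequence of having a Lipschitz boundary, and it is the only non-combinatorial ingredient. Since covering with radius $\min(r_n,r_0)$ implies covering with radius $r_n$, and since $\min(r_n,r_0)$ still satisfies a hypothesis of the same form $n q_{\min}\min(r_n,r_0)^d\geq C\log n$ for all $n$ large (because $n q_{\min} r_0^d/\log n\to\infty$), I may assume throughout that $r_n\leq r_0$.

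Next, fix a maximal $(r_n/2)$-separated subset $\{z_1,\dots,z_{N_n}\}\subset\Omega$. By maximality the balls $B(z_i,r_n/2)$ already cover $\Omega$, while a volume-packing argument (the balls $B(z_i,r_n/4)$ are pairwise disjoint and contained in a fixed bounded neighbourhood of $\Omega$) gives $N_n\leq c_1 r_n^{-d}$ with $c_1=c_1(\Omega,d)$; combined with the hypothesis $r_n^{-d}\leq n q_{\min}/(C\log n)$ this yields the crucial polynomial bound $N_n\leq c_1 q_{\min}\,n$. On the probabilistic side, for each $i$ the count $|X_n\cap B(z_i,r_n/2)|$ is a $\mathrm{Bi}\big(n,\nu(B(z_i,r_n/2))\big)$ variable with success probability at least $q_{\min}\,\mathrm{Leb}(B(z_i,r_n/2)\cap\Omega)\geq c_\Omega 2^{-d} q_{\min} r_n^d$; applying the Chernoff bound (\ref{Chernoff bounds}) with $k=0$ (recall $H(0)=1$), or simply $(1-p)^n\leq e^{-np}$, gives
\[
\mathbb P\big(X_n\cap B(z_i,r_n/2)=\emptyset\big)\ \leq\ \exp\!\big(-c_\Omega 2^{-d} n q_{\min} r_n^d\big)\ \leq\ n^{-c_\Omega 2^{-d} C}.
\]
A union bound over the $N_n\leq c_1 q_{\min} n$ net points then bounds the probability of the bad event $A_n^{c}:=\{\exists\, i:\,X_n\cap B(z_i,r_n/2)=\emptyset\}$ by $c_1 q_{\min}\, n^{\,1-c_\Omega 2^{-d} C}$.

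Finally I would choose $C$ large enough, depending only on $\Omega$ and $d$ (any $C$ with $c_\Omega 2^{-d}C>2$ works), so that $\sum_n\mathbb P(A_n^{c})<\infty$; Borel--Cantelli then gives that a.s.\ $A_n$ holds for all $n\geq n_0$. On the event $A_n$, each $z_i$ admits some $x_{j(i)}\in X_n$ with $|x_{j(i)}-z_i|<r_n/2$, hence $B(z_i,r_n/2)\subset B(x_{j(i)},r_n)$; since the $B(z_i,r_n/2)$ cover $\Omega$, we conclude $\Omega\subset\bigcup_{x\in X_n}B(x,r_n)$, as claimed. The main obstacle, and the only step requiring more than bookkeeping, is the uniform volume lower bound $\mathrm{Leb}(B(x,r)\cap\Omega)\geq c_\Omega r^d$ near $\partial\Omega$; the net-size estimate and the Chernoff bound are routine once that is in place.
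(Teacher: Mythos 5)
Your proof is correct and follows essentially the same strategy as the paper's: discretize $\Omega$ into $O(r_n^{-d})$ deterministic pieces, bound the probability that any piece misses $X_n$ by an exponential (Chernoff-type) estimate, union bound, and upgrade to an almost-sure eventual statement via Borel--Cantelli. The only differences are cosmetic: you use a maximal $(r_n/2)$-net together with the measure-density bound $\mathrm{Leb}(B(x,r)\cap\Omega)\gtrsim r^d$ from the Lipschitz boundary, whereas the paper uses a grid of small cubes contained in $\Omega$ and invokes the Lipschitz boundary to choose the cube width so that hitting every interior cube forces the covering.
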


\begin{proof}
Divide $\R^d$ into a grid of small cubes $\{Q_{i,n}|i\in \N\}$ of width $\gamma r_n$, where $\gamma>0$ is to be determined, and define
$$
S_n:=\{i\in \N\ |\ Q_{i,n}\subset \Omega\}.
$$
Since $\Omega$ is bounded and has Lipschitz boundary, we can choose $\gamma$ so small that the following inclusion of probability events holds
$$
\{\forall\ i\in S_n,\ Q_{i,n}\cap X_n\neq \emptyset\} \subset \{\Omega\subset \cup_{x\in X_n}B(x,r_n)\}.
$$

Thus
\begin{align*}
\mathbb P\left(\Omega \not\subset \cup_{x\in X_n}B(x,r_n)\right)&\leq \mathbb P\left(\exists\ i\in S_n,\ Q_{i,n}\cap X_n=\emptyset \right)\\
&\leq \sum_{i\in S_n}\mathbb P\left(\abs{Q_{i,n}\cap X_n}=0\right).
\end{align*}
Since $\Omega$ has bounded diameter, we have the estimate
$$
\abs{S_n}\lesssim r_n^{-d}\leq n,
$$
and for all $i\in S_n$, using the Chernoff bounds in (\ref{Chernoff bounds}),
\begin{align*}
\mathbb P\left(\abs{Q_{i,n}\cap X_n}=0\right)&\leq \exp\left(-n\nu(Q_{i,n})\right)\\
&\leq \exp\left(-nq_{\min}\gamma^d r^d\right).
\end{align*}
Suppose now that
$$
nq_{\min}\gamma^dr_n^d\geq 3\log n,
$$
then
$$
\mathbb P\left(\Omega\not\subset \cup_{x\in X_n}B(x,r_n)\right)\lesssim n^{-2}.
$$
By the Borel-Cantelli lemma, we deduce that there exists a.s. $n_0\in \N$ such that for all $n\geq n_0$
$$
\Omega\subset \cup_{x\in X_n}B(x,r_n).
$$

\end{proof}
We also use the following lemma on the connectivity of random $K$NN graphs. It can be deduced from the above lemma, or directly from the results in \cite{kNNconnectivity}, which derived threshold values
on $K$ for the connectivity of a random $K$NN graph, provided that the sampling domain is \textit{grid compatible} (see definition in \cite{kNNconnectivity}).
We note that $\Omega$ in our context, being bounded with Lispchitz boundary, is grid compatible.

\begin{lemma}[\cite{kNNconnectivity}]\label{lemma: kNNconnectivity}
Let $\Omega'\subset \Omega$ be connected with Lipschitz boundary. There exists $C>0$ depending on $\Omega$ and $d$, such that if
$$
K\geq C\log n,
$$
then, almost surely, there exists $n_0\in \N$, such that for all $n\geq n_0$ the graph $G(X_n\cap \Omega',K)$ is connected.
\end{lemma}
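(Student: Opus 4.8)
\emph{Proof strategy.} The plan is to sandwich $G(X_n\cap\Omega',K)$ between two Euclidean geometric graphs on the point set $X_n\cap\Omega'$. Fix $\rho_n:=(C_0\log n/n)^{1/d}$ for a constant $C_0$ to be chosen (independent of $n$). First I would use the random covering Lemma~\ref{random covering lemma} to show that the $\rho_n$-geometric graph on $X_n\cap\Omega'$ is connected a.s.\ for all large $n$. Conditionally on its size $N_n:=\abs{X_n\cap\Omega'}$, the set $X_n\cap\Omega'$ is an i.i.d.\ sample from $\nu|_{\Omega'}$, whose density is bounded below by $q_{\min}/\nu(\Omega')>0$, and $N_n\geq n\nu(\Omega')/2$ a.s.\ for large $n$; hence, taking $C_0$ large enough, Lemma~\ref{random covering lemma} applied on $\Omega'$ gives $\Omega'\subseteq\bigcup_{x\in X_n\cap\Omega'}B(x,\rho_n/3)$ a.s.\ eventually. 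Since $\Omega'$ is open and connected, any $a,b\in X_n\cap\Omega'$ are joined by a path in $\Omega'$, and a subdivision of it into segments shorter than $\rho_n/3$, with each subdivision point replaced by an element of $X_n\cap\Omega'$ within $\rho_n/3$, yields a chain $a=z_0,z_1,\dots,z_L=b$ in $X_n\cap\Omega'$ with $\abs{z_j-z_{j+1}}<\rho_n$ for every $j$.

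Second I would show that a.s.\ for all large $n$ every $x\in X_n$ satisfies $\abs{X_n\cap B(x,\rho_n)}\leq K$, so that the $K$-th nearest neighbor of each $x\in X_n\cap\Omega'$, computed in $X_n$ and hence in $X_n\cap\Omega'$, lies at distance $\geq\rho_n$. This is proved exactly like Lemma~\ref{random covering lemma}, but with the complementary Chernoff bound in~(\ref{Chernoff bounds}): cover $\Omega$ by $O(\rho_n^{-d})=O(n)$ cubes of side $\asymp\rho_n$, note that the number of sample points in the $\rho_n$-neighborhood of any such cube is binomial with mean $\Theta(C_0\log n)$, apply the upper-tail bound to get a probability $\leq n^{-3}$ of exceeding $K$ as soon as $K\geq C\log n$ with $C$ large enough relative to the now-fixed $C_0$, and finish with a union bound and Borel-Cantelli. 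This is exactly where the hypothesis $K\geq C\log n$ enters, and where its sharpness comes from: once $\rho_n$ is pinned near the covering threshold, the ball occupancies are $\Theta(\log n)$, which is at most $K$ precisely when $C$ is a large enough multiple of the occupancy constant.

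Combining the two events: along the chain $z_0,\dots,z_L$ we have $\abs{z_j-z_{j+1}}<\rho_n\leq$ (the $K$-th nearest-neighbor distance of $z_j$ in $X_n\cap\Omega'$), so $z_{j+1}\in N_K(z_j,X_n\cap\Omega')$ and $\{z_j,z_{j+1}\}$ is an edge of $G(X_n\cap\Omega',K)$; hence $a$ and $b$ lie in one component, and since they were arbitrary, $G(X_n\cap\Omega',K)$ is connected on the intersection of the two events, which holds a.s.\ for all large $n$. I expect the main obstacle to be the interlocking choice of $\rho_n$ and $C$ --- the $\rho_n$-geometric graph must be connected (forcing $n\rho_n^d\gtrsim\log n$) while $\rho_n$-balls must contain at most $K\asymp\log n$ points, and these coexist only when $K$ exceeds a sufficiently large multiple of $\log n$ --- together with the routine but slightly fiddly transfer of Lemma~\ref{random covering lemma} to the subdomain $\Omega'$ (where its Lipschitz-boundary hypothesis and the concentration of $N_n$ are used) and the path-chaining that converts a covering into a connected geometric graph with the correct radius.
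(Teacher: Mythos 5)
Your proposal is correct in substance, but note that the paper does not actually prove this lemma: it is imported from \cite{kNNconnectivity} (whose results give the connectivity threshold for $K$NN graphs on grid-compatible domains), with only a passing remark that it ``can be deduced from'' Lemma~\ref{random covering lemma}. What you have written is precisely that deduction, carried out by the standard sandwich argument: (i) a covering of $\Omega'$ at radius $\rho_n/3$ by sample points plus path-subdivision makes the $\rho_n$-geometric graph on $X_n\cap\Omega'$ connected, and (ii) an upper-tail Chernoff bound on ball occupancies shows that for $K\geq C\log n$ with $C$ large relative to $C_0$ the $K$NN graph contains that geometric graph, so connectivity transfers. Both halves are sound, and your identification of where $K\gtrsim\log n$ enters (occupancies at the covering scale are $\Theta(\log n)$) matches the sharpness discussion in the paper. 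What the citation buys is sharper constants and no bookkeeping; what your route buys is a self-contained proof using only the tools already in the appendix (Lemma~\ref{random covering lemma}, the Chernoff bounds (\ref{Chernoff bounds}), Borel--Cantelli). The remaining details you flag are genuinely the only delicate points and are all routine: conditioning on $N_n=\abs{X_n\cap\Omega'}$ (i.i.d.\ from $\nu|_{\Omega'}$ with density bounded below, $N_n\asymp n$ a.s.\ by the strong law, so $\log N_n\asymp\log n$); the off-by-one between ``at most $K$ points of $X_n$ in $B(x,\rho_n)$'' and ``every point within $\rho_n$ is among the $K$ nearest neighbors in $X_n\cap\Omega'$'' (absorbed into constants); and the fact that your constant $C$ depends on $\Omega'$ as well as $\Omega$ and $d$, which is harmless since the lemma is only applied to the finitely many clusters $C_i$ determined by $\Omega$.
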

In our setting $\Omega$ is not connected, but we will apply Lemma~\ref{lemma: kNNconnectivity} to  the clusters $C_i\subset \Omega$, $i\in [k]$.\\

We also need to estimate the measure of a small ball near the boundary $\partial \Omega$. This type of estimate
has been investigated in the more general case where $\Omega$ is a Riemannian manifold with boundary in \cite{Wthesis,cechBoundary}.  For our purposes, the following lower bound suffices.
\begin{lemma}\label{lemma: half ball}
Suppose that $r_n=o(1)$, then for $n$ sufficiently large and all $x\in \Omega$
$$
\nu(B(x,r))\geq \frac{1}{2}q_{\min}\omega_d r^d,
$$
where $\omega_d$ denotes the volume of a unit ball in $\R^d$.
\end{lemma}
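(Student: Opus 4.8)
The plan is to reduce the claim to a purely geometric volume estimate and then control the ball near $\partial\Omega$ using the cone structure of a Lipschitz boundary. Since $q\geq q_{\min}$ on $\Omega$,
$$
\nu\bigl(B(x,r)\bigr)=\int_{B(x,r)\cap\Omega}q(y)\,dy\;\geq\;q_{\min}\,\mathrm{Leb}\bigl(B(x,r)\cap\Omega\bigr),
$$
where $\mathrm{Leb}$ denotes Lebesgue measure, so it suffices to bound $\mathrm{Leb}(B(x,r)\cap\Omega)$ from below by a fixed positive multiple of $r^{d}$, uniformly in $x\in\Omega$, once $r=r_n$ is small enough.

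First I would split on $\mathrm{dist}(x,\partial\Omega)$. If $\mathrm{dist}(x,\partial\Omega)\geq r$ then $B(x,r)\subseteq\Omega$ and $\mathrm{Leb}(B(x,r)\cap\Omega)=\omega_{d}r^{d}\geq\tfrac12\omega_{d}r^{d}$, so the bound is immediate. The interesting case is $\mathrm{dist}(x,\partial\Omega)<r$, and here I would invoke the uniform interior cone condition enjoyed by a bounded domain with Lipschitz boundary: by compactness of $\partial\Omega$ there is a finite cover by charts in which $\Omega$ is the epigraph of a Lipschitz function, with a common Lipschitz constant and a common chart radius, and this yields a single cone $\mathcal C$ — of fixed half-angle $\varphi_{0}>0$ and fixed height $h_{0}>0$, depending only on $\Omega$ and $d$ — such that for every $x\in\Omega$ there is a rotation $R_{x}$ with $x+R_{x}\mathcal C\subseteq\Omega$. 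Taking $n$ large enough that $r_n\leq h_0$ and $r_n$ is below the Lebesgue number of the cover (so that a point with $\mathrm{dist}(x,\partial\Omega)<r_n$ lies inside one chart), $B(x,r)\cap\Omega$ contains the truncation of $x+R_{x}\mathcal C$ to radius $r$, whose volume is $c(\varphi_0,d)\,r^{d}$. Hence $\mathrm{Leb}(B(x,r)\cap\Omega)\geq c_0\,r^{d}$ with $c_0=c_0(\Omega,d)>0$ in this case too, and the two cases together give $\nu(B(x,r))\geq q_{\min}\min(\omega_d,c_0)\,r^{d}$ for all $x\in\Omega$ and all large $n$.

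The main obstacle is the uniformity in $x$ of the cone parameters, which is precisely what the finite Lipschitz-chart cover of the compact set $\partial\Omega$ supplies; the hypothesis $r=o(1)$ enters only to make $r_n$ eventually small enough (below $h_0$ and below the cover's Lebesgue number). The one real subtlety in matching the statement is the constant: $\tfrac12\omega_d$ is exactly the half-space value, and it is the honest constant when $\Omega$ is of class $C^{1}$ or convex. (For convex $\Omega$ one can even dispense with $r\to0$: the hyperplane through the nearest boundary point $z$ supports $\Omega$, so $B(x,r)\setminus\Omega$ lies in the spherical cap of $B(x,r)$ beyond that hyperplane — cut off at distance $\mathrm{dist}(x,\partial\Omega)>0$ from the centre — which has volume strictly less than $\tfrac12\omega_d r^{d}$.) For a Lipschitz domain with convex corners the sharp near-boundary constant is the cone constant $c_0$, possibly smaller than $\tfrac12\omega_d$, so the estimate is really $\nu(B(x,r))\geq c_0\,q_{\min}\,r^{d}$; since only positivity of this constant is used downstream, this is the form I would carry forward, reading $\tfrac12\omega_d$ as the representative constant for smooth or convex $\Omega$.
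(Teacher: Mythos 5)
Your proposal is correct, and it follows the same basic reduction as the paper: bound $\nu(B(x,r))$ below by $q_{\min}$ times the Lebesgue measure of $B(x,r)\cap\Omega$, handle $\dist(x,\partial\Omega)\geq r$ trivially, and use the Lipschitz boundary to control the near-boundary case for small $r$. The difference is in the key geometric step: the paper asserts that for $r$ small, $B(x,r)\cap\Omega$ contains a half-ball of $B(x,r)$, justified by saying the Lipschitz boundary ``has bounded curvature,'' whereas you invoke the uniform interior cone condition obtained from a finite cover of $\partial\Omega$ by Lipschitz charts. Your version is the more careful one: a Lipschitz boundary need not have bounded curvature, and at a corner of interior angle less than $\pi$ (e.g.\ a planar sector of angle $\theta<\pi$) a point $x$ at distance much smaller than $r$ from the apex has $\mathcal L(B(x,r)\cap\Omega)\approx \frac{\theta}{2\pi}\omega_2 r^2<\frac12\omega_2 r^2$, so the half-ball claim and the exact constant $\frac12\omega_d$ cannot hold uniformly for general Lipschitz domains. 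What you lose is only the constant: you get $c_0(\Omega,d)\,r^d$ with $c_0\leq\frac12\omega_d$, and as you observe this is harmless, since downstream (Lemma~\ref{concentration lemma}, Corollary~\ref{kNN control}, and the constant $C$ in Theorems~\ref{consistency spect clust} and~\ref{consistency anchor spect clust}) only the positivity of the constant matters, the numerical factors being absorbed into $C$. So your route buys rigor for genuinely Lipschitz (cornered) domains at the cost of a domain-dependent constant, while the paper's statement with $\frac12\omega_d$ is honest only for convex or suitably smooth $\Omega$; your parenthetical observation that convexity even removes the need for $r\to 0$ is a nice bonus, though for merely $C^1$ domains one should strictly speaking settle for $(\frac12-\epsilon)\omega_d r^d$ at small $r$.
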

\begin{proof}
Since $\Omega$ is bounded with Lisphitz boundary, then $\partial \Omega$ has bounded curvature and we can pick $n$ sufficiently large ($r$ sufficiently small) so that for all $x\in \Omega$, $B(x,r)\cap \Omega$ contains a half-ball from $B(x,r)$ (note that this is trivial if $\dist(x,\partial \Omega)>r$). We then have, for all $x\in \Omega$
\begin{align*}
    \nu(B(x,r))&\geq q_{\min}\mathcal L(B(x,r)\cap \Omega)\\
    &\geq \frac{1}{2}q_{\min}\mathcal L(B(x,r))\\
    &\geq \frac{1}{2}q_{\min}\omega_d r^d,
\end{align*}
where $\mathcal L$ denotes the Lebesgue measure on $\R^d$.

\end{proof}

We may now show the following discrepancy-type result, estimating the regularity of the empirical measure with respect to the underlying probability measure $\nu$ on each random ball. We note that, for the sake of  readability, we did not state the results below with the sharpest possible multiplicative constants.
\begin{lemma}\label{concentration lemma}
If $r_n=o(1)$ and
$$
nq_{\min}\omega_d r_n^d\geq 9\log n,
$$
then there exists a.s. $n_0\in \N$ such that for all $n\geq n_0$ and all $x\in X_n$
$$
\frac{n}{2}\nu(B(x,r))\leq \abs{X_n\cap B(x,r)}.
$$
\end{lemma}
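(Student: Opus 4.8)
The plan is to control a single sample point at a time, then union bound over the $n$ points and invoke the Borel--Cantelli lemma. Fix $i\in[n]$ and condition on the value of $x_i\in\Omega$. The remaining $n-1$ points are i.i.d.\ $\sim\nu$ and independent of $x_i$, so conditionally on $x_i$ the random variable $\abs{X_n\cap B(x_i,r_n)}-1$ has a binomial distribution $\mathrm{Bi}(n-1,p_i)$, where $p_i:=\nu(B(x_i,r_n))$. The whole point is that $p_i$ can be bounded below uniformly in $x_i$: since $r_n=o(1)$, Lemma~\ref{lemma: half ball} gives, for all $n$ large and every $x_i\in\Omega$,
\[
p_i\ \geq\ \tfrac12 q_{\min}\omega_d r_n^d\ \geq\ \frac{9\log n}{2n},
\]
the last step being the hypothesis $nq_{\min}\omega_d r_n^d\geq 9\log n$; in particular $(n-1)p_i$ grows at least like a fixed multiple of $\log n$.

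With this in hand I would apply the Chernoff lower-tail estimate in (\ref{Chernoff bounds}) to $Z_i\sim\mathrm{Bi}(n-1,p_i)$ at level $k_i:=\lceil\tfrac{n}{2}p_i\rceil-1$, which satisfies $k_i\leq (n-1)p_i$ for $n$ large, to obtain
\[
\mathbb P\Bigl(\abs{X_n\cap B(x_i,r_n)}<\tfrac{n}{2}p_i\ \Bigm|\ x_i\Bigr)\ \leq\ \mathbb P\bigl(Z_i\leq k_i\mid x_i\bigr)\ \leq\ \exp\!\Bigl(-(n-1)p_i\,H\bigl(\tfrac{k_i}{(n-1)p_i}\bigr)\Bigr).
\]
Since $H$ is continuous and decreasing on $(0,1)$ with $H(\tfrac12)>0$, and since $k_i/((n-1)p_i)\to\tfrac12$ while $(n-1)p_i$ is at least a fixed multiple of $\log n$, the exponent is at least (a fixed positive constant times) $\log n$; with the constant in the hypothesis chosen appropriately this exponent exceeds $3\log n$, so that the conditional probability is $\leq n^{-3}$. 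Integrating over $x_i$ keeps the same bound, uniformly in $i$.

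A union bound over $i\in[n]$ then gives
\[
\mathbb P\Bigl(\exists\,x\in X_n:\ \abs{X_n\cap B(x,r_n)}<\tfrac{n}{2}\nu(B(x,r_n))\Bigr)\ \leq\ n\cdot n^{-3}\ =\ n^{-2},
\]
which is summable, so Borel--Cantelli yields a.s.\ an $n_0$ past which the asserted inequality holds for every $x\in X_n$. I expect the only delicate point to be the Chernoff step: the exponent $(n-1)p_i\,H(k_i/((n-1)p_i))$ must be pushed above roughly $3\log n$ in order to absorb both the factor $n$ from the union bound and the Borel--Cantelli summation, and this is precisely what dictates the size of the numerical constant in front of $\log n$ in the hypothesis (through the value of $H$ near $\tfrac12$); the reduction to a binomial via conditioning on $x_i$, together with the uniform lower bound on $p_i$ furnished by Lemma~\ref{lemma: half ball}, are routine.
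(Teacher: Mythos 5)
Your overall route is the same as the paper's: reduce to a binomial count by conditioning on the centre point, use Lemma~\ref{lemma: half ball} for a uniform lower bound on $\nu(B(x,r_n))$, apply the lower-tail Chernoff bound (\ref{Chernoff bounds}), then union bound over the $n$ points and finish with Borel--Cantelli. The gap is in the quantitative Chernoff step, which you flag and then defer to ``choosing the constant appropriately'': that constant is not yours to choose, since it is fixed at $9$ in the statement being proved. With your threshold $k_i\approx\tfrac{n}{2}p_i$ the ratio $k_i/((n-1)p_i)$ tends to $\tfrac{1}{2}$, so the exponent you obtain is $(1+o(1))\,(n-1)p_i\,H(\tfrac{1}{2})$ with $H(\tfrac{1}{2})=\tfrac{1-\log 2}{2}\approx 0.153$. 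For points near $\partial\Omega$ only the half-ball bound is available, so $(n-1)p_i\geq(1+o(1))\tfrac{9}{2}\log n$ and the exponent is only about $0.69\log n$, i.e.\ a per-point failure probability of order $n^{-0.69}$, far from the claimed $n^{-3}$. After the union bound over $n$ points this gives $n^{0.31}$, which does not even tend to $0$, let alone form a summable sequence, so Borel--Cantelli cannot be invoked. For your argument to close as written one would need the hypothesis constant to be roughly $6/H(\tfrac{1}{2})\approx 39$ to reach the $n^{-3}$ per-point bound (and at least about $4/H(\tfrac{1}{2})\approx 26$ just for summability), rather than $9$.

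For comparison, the paper's own proof faces the same numerical tension: it thresholds at $(1-\epsilon)(n-1)\nu(B(x,r))$ but then invokes the exponent $H(\epsilon)$ (with $\epsilon$ small so that $H(\epsilon)\geq 3/4$), whereas the bound (\ref{Chernoff bounds}) at that threshold actually yields $H(1-\epsilon)$, which is small for small $\epsilon$; the final implication with the factor $\tfrac{n}{2}$ forces the threshold ratio to be near $\tfrac{1}{2}$, where $H\approx 0.153$. So your write-up makes explicit a difficulty that the paper's argument glosses over, but as it stands your proposal does not establish the lemma with the stated constant $9$; the structure is sound and would be correct verbatim under a suitably enlarged constant in the hypothesis.
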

\begin{proof}

For $x\in X_n$, let $X_n^{(x)}:=X_n\setminus \{x\}$ and note that $\abs{X_n^{(x)}\cap B(x,r)}$ is a binomial random variable with parameters $n-1$ and $\nu(B(x,r))$.\\

Let $\epsilon>0$ be sufficiently small that
$$
H(\epsilon)\geq 3/4,
$$
where $H$ is defined in (\ref{H defn}). We have by the Chernoff bounds in (\ref{Chernoff bounds}) and Lemma~\ref{lemma: half ball}
$$
\mathbb P\left(\abs{X_n^{(x)}\cap B(x,r)}<(1-\epsilon)(n-1)\nu(B(x,r))\right)\leq \exp\left(-(1/2)(n-1)q_{\min}\omega_d r^dH(\epsilon)\right),
$$
and for $n$ sufficiently large
$$
(1/2)H(\epsilon)(n-1)\geq n/3,
$$
and thus
$$
\mathbb P\left(\abs{X_n^{(x)}\cap B(x,r)}<(1-\epsilon)n\nu(B(x,r))\right)=O(n^{-3}).
$$
 This holds for every $x\in X_n$, hence by a union bound
$$
\mathbb P\left(\exists\ x\in X_n,\ \abs{X_n^{(x)}\cap B(x,r)}<(1-\epsilon)n\nu(B(x,r))\right) = O(n^{-2}),
$$
and we deduce by the Borel-Cantelli lemma that there exists a.s.\ $n_0\in \N$ such that for all $n\geq n_0$, for all $x\in X_n$
$$
(1-\epsilon)(n-1)\nu(B(x,r))\leq \abs{X_n^{(x)}\cap B(x,r)}
\Rightarrow\ \frac{n}{2}\nu(B(x,r))\leq \abs{X_n\cap B(x,r)},
$$
for $n$ sufficiently large and $\epsilon$ sufficiently small.

\end{proof}

\begin{corollary}\label{kNN control}
If $K\in \N$ satisfies $K=o(n)$ and
$$
K\geq \frac{9}{2}\log n,
$$
and $r$ is such that
 $$
 \frac{n}{2}q_{\min}\omega_d r^d:=K,
 $$
 then a.s., there exists $n_0\in \N$, such that for all $n\geq n_0$ and all $x,y\in X_n$

$$
\1
\left(y\in N_K(x,X_n) \text{ or }x\in N_K(y,X_n)\right)\leq \1\left(\abs{x-y}<  r\right).
$$
\end{corollary}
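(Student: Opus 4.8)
The plan is to reduce the corollary to the following statement: almost surely, for all sufficiently large $n$, \emph{every} vertex $x\in X_n$ has at least $K$ other sample points inside the \emph{open} ball $B(x,r)$, i.e.\ $\abs{X_n\cap B(x,r)}\geq K+1$. This is exactly what is needed, since it forces the $K$-th nearest neighbour distance of $x$ to be strictly smaller than $r$, so that $\abs{x-y}<r$ whenever $y\in N_K(x,X_n)$, and symmetrically whenever $x\in N_K(y,X_n)$; together with the trivial case $x=y$ (where the right-hand side equals $1$) this gives $\1\left(y\in N_K(x,X_n)\text{ or }x\in N_K(y,X_n)\right)\leq \1\left(\abs{x-y}<r\right)$ for all $x,y\in X_n$.

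First I would check that this radius meets the hypotheses of the two preliminary estimates. From $\frac n2 q_{\min}\omega_d r^d:=K$ we get $r^d=\frac{2K}{nq_{\min}\omega_d}$, hence $r=o(1)$ because $K=o(n)$, and $nq_{\min}\omega_d r^d=2K\geq 9\log n$ because $K\geq \frac92\log n$. So Lemma~\ref{concentration lemma} applies to this $r$ and yields an almost sure event on which, for all large $n$ and all $x\in X_n$,
\[
\abs{X_n\cap B(x,r)}\ \geq\ \frac n2\,\nu(B(x,r)),
\]
while Lemma~\ref{lemma: half ball} (again using $r=o(1)$) gives $\nu(B(x,r))\geq \frac12 q_{\min}\omega_d r^d$ for $n$ large. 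Chaining these and substituting the normalisation $\frac n2 q_{\min}\omega_d r^d=K$ lower-bounds $\abs{X_n\cap B(x,r)}$ by a fixed multiple of $K$; tracking the constants — the factor $\frac n2$ in the definition of $r$ is calibrated precisely against the $\frac12$ loss in the half-ball estimate and the concentration factor, all of which were deliberately stated non-sharply — upgrades this to $\abs{X_n\cap B(x,r)}\geq K+1$ on the same event.

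With that in hand I would conclude the corollary with no further probabilistic input: on the event above, for every $x\in X_n$ at least $K$ points of $X_n\setminus\{x\}$ lie strictly within distance $r$ of $x$, so all $K$ nearest neighbours of $x$ are at distance $<r$ from $x$, and likewise with $x$ and $y$ interchanged, which is the claimed inequality. I would emphasise that no union bound over pairs $(x,y)$ is required, because the event produced by Lemma~\ref{concentration lemma} is already uniform in $x\in X_n$, so the $K$NN radius of every vertex is controlled simultaneously on a single event. The one genuinely delicate point is therefore the constant bookkeeping in the second paragraph — ensuring the lower bound on $\abs{X_n\cap B(x,r)}$ really reaches $K+1$ and not merely $\tfrac12 K$, and that using open balls keeps the inequality $\abs{x-y}<r$ strict; the remainder is a direct assembly of Lemmas~\ref{concentration lemma} and~\ref{lemma: half ball}.
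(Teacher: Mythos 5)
Your route is the same as the paper's: verify that $r=o(1)$ and $nq_{\min}\omega_d r^d=2K\geq 9\log n$, chain Lemma~\ref{lemma: half ball} with Lemma~\ref{concentration lemma} to lower-bound $\abs{X_n\cap B(x,r)}$ uniformly over $x\in X_n$, and then conclude deterministically that every $K$NN edge has length $<r$, with no union bound over pairs needed. Your two refinements, counting $K$ points of $X_n\setminus\{x\}$ in the open ball (hence aiming at $K+1$) and keeping the inequality strict, are if anything more careful than the paper, which passes directly from $K\leq\abs{X_n\cap B(x,r)}$ to $y\in N_K(x)\Rightarrow y\in B(x,r)$.

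However, the step you defer to ``constant bookkeeping'' is precisely where the argument does not close as you assert. With the normalization $\frac{n}{2}q_{\min}\omega_d r^d=K$, the two lemmas as stated give $\abs{X_n\cap B(x,r)}\geq\frac{n}{2}\nu(B(x,r))\geq\frac{n}{4}q_{\min}\omega_d r^d=K/2$; and even if you reopen the proof of Lemma~\ref{concentration lemma} and use its sharper internal bound $(1-\epsilon)(n-1)\nu(B(x,r))$, the half-ball loss near $\partial\Omega$ is genuine (not an artefact of non-sharp statements), so you only reach roughly $(1-\epsilon)\frac{n-1}{n}K+1$, which falls short of $K+1$ by about $\epsilon K\to\infty$. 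In other words, the factor $n/2$ in the definition of $r$ can absorb one of the two halves but not both, and no amount of tracking the stated constants produces $K+1$. The fix is easy but must be made explicit: enlarge $r$ by a fixed factor (e.g.\ work with $2^{1/d}r$, or define $r$ via $\frac{n}{4}q_{\min}\omega_d r^d=K$), which is harmless downstream since only the order of magnitude of $r$ enters the proofs of Theorems~\ref{consistency spect clust} and~\ref{consistency anchor spect clust}; alternatively restate Lemma~\ref{concentration lemma} with a constant closer to $1$. To be fair, the paper's own proof elides the same point: its first display, $K\leq\frac{n}{2}\nu(B(x,r))$, implicitly uses the full-ball estimate $\nu(B(x,r))\geq q_{\min}\omega_d r^d$, which fails within distance $r$ of $\partial\Omega$ where Lemma~\ref{lemma: half ball} gives only half of that; this is a factor-of-two slip covered by the authors' disclaimer about non-sharp constants, so your gap is of the same (fixable, constant-level) nature rather than a conceptual one.
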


\begin{proof}
By the assumptions on $K$ and $r$, for all $x\in X_n$
$$
K\leq \frac{n}{2}\nu(B(x,r)),
$$
$r=o(1)$ and
$$
nq_{\min}\omega_d r^d\geq 9\log n.
$$
By Lemma~\ref{concentration lemma}, we then deduce that
$$
K\leq \abs{X_n\cap B(x,r)}.
$$
It follows that $y\in N_K(x)\Rightarrow y\in B(x,r)$, and likewise $x\in N_K(y)\Rightarrow x\in B(y,r)$. Hence
$$
\1\left(y\in N_K(x)\text{ or }x\in N_K(y)\right)\leq \1\big(|x-y|<r\big)
$$
for all $x,y\in X_n$.

\end{proof}

\subsection{Proofs of the theorems}
Theorem~\ref{consistency spect clust} may now be proved as follows.
\begin{proof}[Proof of Theorem~\ref{consistency spect clust}]
For $i\in [k]$, define
$$
C_i^{(n)}:=C_i\cap X_n.
$$
Let $K\geq C\log n$, where $C>0$ and is sufficiently large that, by Lemma~\ref{lemma: kNNconnectivity}, there exists a.s.\ $n_0\in \N$, such that for all $n\geq n_0$ and all $i\in [k]$, the graph $G(C_i^{(n)},K)$ is connected. Choose also $C>9/2$ and let $r$ be chosen as in Corollary~\ref{kNN control}, such that a.s.\ for all $n$ sufficiently large and all $x,y\in X_n$
$$
\1\left(y\in N_K(x)\text{ or }x\in N_K(y)\right)\leq \1\left(|x-y| < r\right).
$$
Since $K=o(n)\Rightarrow r=o(1)$, for $n$ sufficiently large and all $(x,y)\in C_i\times C_j$ such that $i\neq j$ in $[k]$,
$$
|x-y|\geq \delta >r.
$$
Hence
$$
x\not\in N_K(y)\text{ and }y\not\in N_K(x)
$$
and the vertices $x$ and $y$ are not connected by an edge.
This shows that $G(X_n,K)=\cup_{i\in [k]}G(C_i^{(n)},K)$.
Hence the unnormalized graph Laplacian matrix  $\Delta^{(n)}$ is block diagonal with $k$ blocks induced by $\1_{C_i^{(n)}},\ i\in [k]$. By Proposition $2$ in \cite{spectralClusteringTutorial}, we deduce that
$$
\mathrm{ker}(\Delta^{(n)}) =
\mathrm{span}\{\1_{C_i^{(n)}}\ |\ i\in [k]\},
$$
 and the $k$ eigenvectors of $\Delta^{(n)}$ with eigenvalue $0$ are given by $\1_{C_i^{(n)}},\ i\in [k]$. Hence we have $\forall\ i\in [k],\ u_i:=\1_{C_i^{(n)}}$ in line $4$ of Algorithm $1$, and we see that the output partition is given exactly by $C_1^{(n)},\dots,C_k^{(n)}$. However, by the law of large numbers $$\forall\ i\in [k],\ \nu_n|_{C_i^{(n)}}\rightharpoonup \nu|_{C_i}.$$

\end{proof}


Using Theorem~\ref{consistency spect clust} and Lemma~\ref{random covering lemma} on random coverings of bounded domains, we prove
Theorem~\ref{consistency anchor spect clust} as follows.
\begin{proof}[Proof of Theorem~\ref{consistency anchor spect clust}]
Let $Y_m\subset X_n$ be a random anchor subset, with $m\to \infty$ as $n\to \infty$. Define for $i\in [k]$
$$\tilde C_i^{(m)}:=C_i\cap Y_m,$$
and define the label map
\begin{align*}
    \varphi:Y_m&\longrightarrow [k]\\
    y&\longmapsto \sum_{i\in [k]}i\1\big(y\in \tilde C_i^{(m)}\big).
\end{align*}
For $x\in X_n$, let $y(x)$ be the nearest neighbor of $x$ from $Y_m$. For $i\in [k]$ define
$$
C_i^{(m)} := \{x\in X_n\ |\ \varphi(y(x))=i\}.
$$
Let $K\geq C\log m$, where $C>0$ is such that Theorem~\ref{consistency spect clust} holds, replacing $X_n$ by $Y_m$.\\

By the proof of Theorem~\ref{consistency spect clust}, a.s. for all $n$ sufficiently large the output partition in line $4$ of Algorithm $2$ is given by
$$
\tilde C^{(m)}_1,\dots,\tilde C^{(m)}_k.
$$
Hence the final output partition of the algorithm is
$$
C_1^{(m)},\dots,C_k^{(m)}.
$$
Let $r$ be such that
$$
\frac{m}{2}q_{\min}\omega_d r^d:=K,
$$
and choose $C>0$ also such that Lemma~\ref{random covering lemma} holds for all $C_i,\ i\in [k]$. Thus a.s. for all $n$ sufficiently large
$$
\forall i\in [k],\ C_i\subset \cup_{x\in \tilde C_i^{(m)}}B(x,r).
$$
Since $K=o(m)$,
$$
|x-y(x)|<r = o(1)<\delta,\text{ for  $n$ sufficiently large.}
$$
Hence
\begin{align*}
    x\in C_i\cap X_n&\Leftrightarrow y(x)\in \tilde C_i^{(m)}\\
    &\Leftrightarrow x\in C_i^{(m)}.
\end{align*}
This shows that a.s.\ there exists $n_0\in \N$ such that for all $n\geq n_0$
$$
\forall\ i\in [k],\ C_i^{(n)} = C_i^{(m)}.
$$
Hence the output partition of Algorithm $2$ is a.s., for all $n$ sufficiently large,
$$
C_1^{(n)},\dots,C_k^{(n)},
$$
which is a consistent partition by the law of large numbers, as in the proof of Theorem~\ref{consistency spect clust}.

\end{proof}
\section{Further comments on consistency}\label{app:further}

We finish by a note on consistency and our proofs of Theorems~\ref{consistency spect clust} and~\ref{consistency anchor spect clust}.\\

First, we remark that Definition~\ref{defn: consistency} could be rephrased as follows.

\begin{defn}[Consistency alternative]\label{defn: consistency alternative}
Let $X_n$ be an i.i.d.\ sample of $n$ points with respect to a probability measure $\nu$ supported on $\Omega$, and let $(\{C_i^n\ |\ i\in [k]\})_{n\in \N}$ be a sequence of partitions of the $(X_n)_{n\in \N}$. We say that the sequence is \textbf{consistent} if there exists a partition $\{C_i\ |\ i\in [k]\}$ of $\Omega$, such that up to a subsequence,
$$
\forall\ i\in [k],\ \lim_{n\to \infty}\pi(\nu_n|_{C_i^n},\nu_n|_{C_i^{(n)}})= 0,
$$
where $\pi$ denotes the L\'evy-Prokhorov metric on the space $\mathcal P(\R^d)$ of probability measures on $\R^d$.
\end{defn}
Since weak convergence of probability measures is equivalent to convergence in the L\'evy-Prokhorov metric, $\R^d$ being separable, we have by the law of large numbers
$$
\lim_{n\to \infty}\pi(\nu_n|_{C_i^{(n)}},\nu|_{C_i})= 0,
$$
hence by the triangle inequality
\begin{align*}
    \lim_{n\to \infty}\pi(\nu_n|_{C_i^n},\nu_n|_{C_i^{(n)}})= 0&\Leftrightarrow \lim_{n\to \infty}\pi(\nu_n|_{C_i^n},\nu|_{C_i})= 0\\
    &\Leftrightarrow \nu_n|_{C_i^n}\rightharpoonup \nu|_{C_i},
\end{align*}
which shows that Definition~\ref{defn: consistency alternative} is indeed equivalent to Definition~\ref{defn: consistency}.\\

In other words, our definition of consistency asks that the output partition $\{C_i^n\ |\ i\in [k]\} $ of the algorithm tends to the true partition $\{C_i^{(n)}\ |\ i\in [k]\}$ of the data set as $n\to \infty$. This is an asymptotic requirement. We note that on the other hand, the proofs of Theorems~\ref{consistency spect clust} and~\ref{consistency anchor spect clust} above actually show that a.s., if $n$ is sufficiently large, then the output partitions of Algorithms~\ref{spect clust algo} and~\ref{anchorNN algo} are equal to the true partition $\{C_i^{(n)}\ |\ i\in [k]\}$ of $X_n$.\\

We emphasize that the non-asymptotic nature of our proofs relies on the simplifying assumption~(\ref{eq:delta});
the underlying clusters are mutually separated by a positive distance. If this were not the case, then
the sequence of arguments used in the proofs would no longer be valid.

Since the synthetic data sets in Section~\ref{sec:synthetic data} satisfy the assumption~(\ref{eq:delta}), the proofs of Theorems~\ref{consistency spect clust} and~\ref{consistency anchor spect clust} show that a.s., if $n$ or $m$ are sufficiently large and $K$ suitably chosen then the clusters obtained from Algorithms~\ref{spect clust algo} and~\ref{anchorNN algo}
are
true partitions that hence give
an ARI of $1$.
This completely explains the
behavior that we observed in our experiments.
\end{appendices}

\end{document}